\documentclass{article}


\usepackage[nonatbib,final]{neurips_2021}




\usepackage[utf8]{inputenc} 
\usepackage[T1]{fontenc}    
\usepackage{hyperref}       
\usepackage{url}            
\usepackage{booktabs}       
\usepackage{amsfonts}       
\usepackage{nicefrac}       
\usepackage{microtype}      
\usepackage{xcolor}         
\usepackage{amsmath, amsfonts, amssymb, amsthm, amsbsy, amscd, bm, bbm,mathrsfs} 
\usepackage{tcolorbox}

\usepackage{algorithm}
\usepackage[noend]{algpseudocode}

\newcommand{\citep}{\cite}
\newcommand{\citet}{\cite}
\newcommand{\br}{\bm{r}}
\newcommand{\bR}{\bm{R}}
\DeclareMathOperator{\Proj}{Proj}

\newcommand{\util}{\mathrm{Util}}

\newcommand{\R}{\mathbb{R}}

\newcommand{\goto}{\rightarrow}

\newcommand{\N}{\mathcal{N}}

\newcommand{\E}{\operatorname{\mathbb{E}}}
\newcommand{\e}{\mathrm{e}}

\newcommand{\bx}{\bm{x}}

\newcommand{\bI}{\bm{I}}

\newcommand{\by}{\bm{y}}

\newcommand{\bz}{\bm{z}}

\newtheorem{theorem}{Theorem}
\newtheorem{othertheorem}{othertheorem}[section]
\newtheorem{othertheorem2}{othertheorem2}[section]
\newtheorem{lemma}[othertheorem]{Lemma}

\newtheorem{proposition}[othertheorem]{Proposition}

\theoremstyle{definition}
\newtheorem{definition}[othertheorem]{Definition}
\theoremstyle{remark}
\newtheorem{remark}[othertheorem2]{Remark}
\theoremstyle{assumption}
\newtheorem{assumption}[othertheorem]{Assumption}
\theoremstyle{example}

\title{You Are the Best Reviewer of Your Own Papers: An Owner-Assisted Scoring Mechanism}

%

\author{%
  Weijie J.~Su \\
  Department of Statistics and Data Science\\
  University of Pennsylvania\\
  \texttt{suw@wharton.upenn.edu} \\
}

\begin{document}

\maketitle

\begin{abstract}

I consider a setting where reviewers offer very noisy scores for several items for the selection of high-quality ones (e.g., peer review of large conference proceedings), whereas the owner of these items knows the true underlying scores but prefers not to provide this information. To address this withholding of information, in this paper, I introduce the \textit{Isotonic Mechanism}, a simple and efficient approach to improving imprecise raw scores by leveraging certain information that the owner is incentivized to provide. This mechanism takes the ranking of the items from best to worst provided by the owner as input, in addition to the raw scores provided by the reviewers. It reports the adjusted scores for the items by solving a convex optimization problem. Under certain conditions, I show that the owner's optimal strategy is to honestly report the true ranking of the items to her best knowledge in order to maximize the expected utility. Moreover, I prove that the adjusted scores provided by this owner-assisted mechanism are significantly more
accurate than the raw scores provided by the reviewers. This paper concludes with several extensions of the Isotonic Mechanism and some refinements of the mechanism for practical consideration.
\end{abstract}

\section{Introduction}
\label{sec:introduction}



Designing mechanisms for evaluating a set of items by aggregating scores from agents has applications in a wide range of domains \cite{brabham2013crowdsourcing,babichenko2020incentive,stelmakh2019testing}. As an example of these applications, reviewers are asked to score submissions for conference proceedings based on certain criteria, and the scores serve as a crucial factor in accepting or rejecting the submissions. Ideally, one wishes to design a mechanism that is expected to produce scores that adequately reflect the quality of submissions.

In the last few years, however, the explosion in the number of submissions to many artificial intelligence and machine learning conferences has made it challenging to maintain high-quality scores from peer reviews. On the one hand, the number of submissions to NeurIPS, a leading machine learning conference, has grown from 1,673 in 2014 to a record-high of 9,454 in 2020~\citep{nipsexp,nips2020}. On the other hand, the number of qualified reviewers---those who have published at least one paper in a premier machine learning conference---cannot keep up with the growth of the submission size~\citep{sculley2018avoiding}. This significant disparity imposes significant burdens on the qualified reviewers, thereby reducing the average amount of time spent reviewing a paper and, worse, pushing program chairs to solicit novice reviewers for top-tier machine learning conferences, including NeurIPS, ICML, AAAI, and many others~\citep{stelmakh2020novice}.


Given the fundamental role of peer review in scientific research, the consequences of this reality have begun to impede the advancement of machine learning research~\citep{langford2015arbitrariness,brezis2020arbitrariness,tomkins2017reviewer}. In the famous NIPS experiment conducted in 2014~\citep{nipsexp}, review scores were observed to have a surprisingly high level of arbitrariness. The situation has presumably worsened as the number of submissions in 2020 was more than five times that when the experiment was conducted. While a moderate level of review arbitrariness reflects the diversity of the scientific community, such a high level as in the 2014 NIPS experiment would very likely lead to the acceptance of many low-quality papers and, correspondingly, the rejection of a high volume of high-potential papers. In the long term, the poor reliability of peer review might erode the public confidence in these machine learning conference proceedings, a challenge that the scientific community
cannot afford to ignore~\citep{rogers2020can}.


To address this challenge, in this paper, I introduce a mechanism that allows one to produce more reliable scores that are modified from the raw review scores. While it seems unrealistic to extend the reviewer pool or assign more papers to each reviewer, it is possible, although rare, to extract information from the authors and incorporate it into the improvement of scores (see, e.g., \cite{stelmakh2019testing}). In fact, an author often submits multiple papers simultaneously to the same conference and is likely to be among the most capable persons to assess the quality of her submissions. Unfortunately, if asked, the author has an incentive to report higher scores than what she believes to maximize utility. To reconcile this dilemma, my mechanism requires the author to report the ranking of her submissions in descending order of the quality. With this ranking in place, the mechanism solves its adjusted scores from a simple convex programming problem, which is in the form of isotonic regression~\citep{barlow1972isotonic}, using raw review ratings and the ranking as data. Thus, this new mechanism is referred to as the Isotonic Mechanism.


The Isotonic Mechanism has several appealing properties that guarantee its superiority over the use of raw review scores for decision-making. First, I show that a rational author's \textit{unique} optimal strategy is to report the true ranking of her submissions under the assumption that the utility of the author is an additive convex function over all submissions. Even in the case where the author is not completely certain about the ranking of the ``true underlying'' scores, the author is better off reporting the most accurate ranking, as far as she knows. Second, I prove that the Isotonic Mechanism supplied with the optimal strategy taken by the author strictly improves the accuracy of estimating the true underlying scores.

Further analysis reveals that the gain of the Isotonic Mechanism over the raw scores becomes especially substantial if the variability of the scores provided by reviewers is high and the number of submissions by the author is large---the very situation that many machine learning conferences are faced with. While the high variability often results from the reduction of review time per submission and the arbitrariness of novice reviewers~\citep{franccois2015arbitrariness}, more recently, it has become increasingly common for an author to submit multiple papers to a single conference. For example, 133 authors submitted at least five papers to ICLR in 2020, which constitutes a significant portion of the total 2,594 submissions. Even more surprisingly, the maximum number of submissions authored by a single individual is as high as 32 for this conference~\citep{iclr}. These ever-growing statistics render the Isotonic Mechanism an ideal approach for improving peer review in these scenarios.

The Isotonic Mechanism is largely orthogonal to many existing approaches for obtaining better review ratings in the literature~\cite{liu2020surrogate,gneiting2007strictly,wang2020debiasing}. In this line of research, the primary focus is to develop mechanisms that incentivize reviewers to provide more accurate scores or sharpen the scores by modeling the generation of scores. In stark contrast, the Isotonic Mechanism solely exploits information from the authors' side, in addition to the raw scores. This orthogonality suggests the promise of incorporating the underlying concept of the Isotonic Mechanism into many existing approaches. For completeness, I remark that an emerging line of research in the mechanism design area leverages additional author information to improve peer review~\cite{aziz2019strategyproof,noothigattu2021loss,jecmen2020mitigating,mattei2020peernomination}.

\newcommand{\ind}{\mathtt{ind}}
\newcommand{\I}{\mathcal{I}}
\renewcommand{\O}{\mathcal{O}}

\section{Methodology}
\label{sec:conv-util-funct}

In this section, I introduce the Isotonic Mechanism in detail and provide theoretical guarantees under certain conditions, showing that (1) the owner/author would report the true ranking of her items/submissions in order to maximize the expected utility, and (2) this mechanism improves the overall statistical estimation efficiency for evaluating the papers. Throughout this paper, I consider an owner of the items who is fully rational in that her ultimate goal is to maximize the expectation of the overall utility as much as possible.



Consider $n$ items and let their true scores be denoted by $R_1, R_2, \ldots, R_n$. The reviewers rate the $n$ items as $y_1, \ldots, y_n$, which are given by
\begin{equation}\label{eq:model}
y_i = R_i + z_i
\end{equation}
for $i = 1, \ldots, n$. Above, $z_i$'s are the noise terms. In my setup, the owner of $n$ items is asked to provide a ranking of the items, denoted by $\pi$, which is a permutation of $1, 2, \ldots, n$. Ideally, the permuted true score vector
\[
\pi \circ \bR := (R_{\pi(1)}, R_{\pi(2)}, \ldots, R_{\pi(n)})
\] 
is (approximately) in the descending order, but it does not necessarily have to be the case, as the owner has complete freedom to report any ranking. In my mechanism, the ranking serves as a shape-restricted constraint that facilitates better estimation of the true underlying scores $R_1, \ldots, R_n$ based on the raw scores $\by = (y_1, \ldots, y_n)$. More precisely, taking as input the raw scores $\by$ by the reviewers and the ranking $\pi$ by the owner, my mechanism outputs scores $\widehat\bR := (\widehat R_1, \ldots, \widehat R_n)$ that either serve as the adjusted scores or as a reference for decision-making by, for example, the program chair.

I now introduce the Isotonic Mechanism for adjusting the raw scores by the reviewers, provided the input data $\by$ and ranking $\pi$. Throughout the paper, $\|\cdot\|$ denotes the Euclidean norm.


\begin{tcolorbox}[boxrule=0.5pt]
The Isotonic Mechanism reports $\widehat\bR(\pi)$, which is the optimal solution of the following convex program:
\begin{equation}\label{eq:isotone}
\begin{aligned}
&\min_{\br} & ~~  & \frac12 \| \by - \br\|^2 \\
&\text{s.t.~} & ~~ & r_{\pi(1)} \ge r_{\pi(2)} \ge \cdots \ge r_{\pi(n)}.
\end{aligned}
\end{equation}
\end{tcolorbox}

When clear from the context, I omit the dependence on $\pi$ when writing the reported ranking, $\widehat\bR$.

In words, the Isotonic Mechanism finds the solution by projecting the raw scores onto the feasible region $\{\br: r_{\pi(1)} \ge r_{\pi(2)} \ge \cdots \ge r_{\pi(n)}\}$, which constitutes a \textit{convex} cone that is often referred to as the isotonic cone (up to permutation), hence the name Isotonic Mechanism. Needless to say, the mechanism wishes for the owner to report the true ranking, although this is not necessarily the case. If the reported ranking is $\pi^\star$, the true value $\bm\bR$ is a feasible point for the program \eqref{eq:isotone}, and this program is essentially a restricted least-squares estimator. The projection presumably improves the estimation accuracy because it greatly reduces the search space for the true scores. 


From an algorithmic perspective, an appealing feature of this mechanism is that \eqref{eq:isotone} is a convex quadratic programming problem and, therefore, is tractable. Even better, this program admits a certain structure that allows for a highly efficient algorithm called the pool adjacent violators algorithm \citep{kruskal64,barlow72,bogdan2015slope}. Interested readers are referred to \citet{best90,odersimplex} for other algorithms for solving isotonic regression.

Several pressing questions concerning the Isotonic Mechanism must be addressed. What is the optimal strategy of the owner, and will she report the true ranking out of her own interest? Moreover, does the mechanism, provided that the owner chooses the optimal strategy, outperform the raw scores of the reviewers in any sense? The following subsections address these questions.


\subsection{Optimality}
\label{sec:optimal-strategy}


To find the optimal strategy of the owner, I first recognize that the owner is rational and aims to maximize her expected utility by supplying any ranking $\pi$ of her choice to the Isotonic Mechanism~\eqref{eq:isotone}. On the surface, it seems unclear which ranking would maximize the expected utility.

The following assumptions are made for the setup of the mechanism. Before introducing these assumptions, it is worth noting that the next section extends my results to a relaxed version of these assumptions.
\begin{assumption}\label{ass:convex}
Given (final) scores $\widehat R_1, \ldots, \widehat R_n$ of $n$ items in the possession of the owner, the owner's utility takes the form $\util( \widehat\bR ) = \sum_{i=1}^n U( \widehat R_i )$,
where $U$ is a nondecreasing convex function. 


\end{assumption}

Under this assumption, the utility function is \textit{separable} in the sense that it is additive over all $n$ items. Regarding the convexity assumption of the utility function, note that it boils down to requiring that the marginal utility $U'(r)$ (assuming differentiability) is a nondecreasing function. At first glance, this is inconsistent with the conventional law of diminishing marginal utility~\citep{kreps1990course}. However, a more careful look first shows that the score $\widehat R_i$ does not measure the quantity of some matter. Moreover, note that for peer review in conference proceedings, a high score largely determines whether the paper would be presented as a poster, oral talk, or even as a plenary talk, which makes a significant difference for a paper in terms of impact. Accordingly, utility increases rapidly as the score increases; therefore, a convexity assumption on utility sheds light on this reality. 

In Section~\ref{sec:extens-util-funct}, an extension of this assumption is considered. In addition, this assumption is discussed in the practical context of peer review in Section~\ref{sec:discussion}.

Notably, in many applications, the utility can vanish to zero once the score $r$ is below a certain value. In view of this, examples of such utility functions include $U(r) = \max\{0, ar + b\}$ and $U(r) = \max\{0, \e^{ar+b} - c\}$, where $a$ and $c$ are positive constants. 


\begin{assumption}\label{ass:author}
The owner knows the true ranking of her items. That is, the owner knows which permutation $\pi^\star$ makes $\pi^\star \circ \bR$ in descending order.

\end{assumption}

\begin{remark}
Turning to Assumption~\ref{ass:author}, it is noteworthy that the owner only needs to compare the true values instead of knowing the exact values of $R_i$'s in order to report the true ranking. In Section~\ref{sec:extens-util-funct}, I relax this assumption by considering an owner who possesses only partial knowledge of the true ranking.

\end{remark}

\begin{assumption}\label{ass:noise}
The noise $(z_1, \ldots, z_n)$ follows an exchangeable distribution in the sense that $(z_1, \ldots, z_n)$ has the same probability distribution in $\R^n$ as $\rho \circ \bz := (z_{\rho(1)}, \ldots, z_{\rho(n)})$ for any permutation $\rho$ of $1, \ldots, n$.

\end{assumption}

\begin{remark}
This assumption includes the often-assumed setting, where $(z_1, \ldots, z_n)$ comprises independent and identically distributed random variables. This generality is useful in the case where the noise terms involve a common latent factor---for example, a recent trend in research directions---which invalidates independence, but the exchangeability remains true. Furthermore, it is beneficial to note that if the score of the $i$-th paper is averaged over several reviewers, the noise terms may have different variances. However, my assumption remains valid if the number of reviewers assigned to each paper can be treated as an independent and identically distributed random variable. In addition, technically speaking, my assumptions do not require the noise to have a zero mean, although I do not intend to emphasize this generalization.

\end{remark}

Interestingly, the following theorem shows that reporting the true ranking is the optimal strategy. Recall that the true ranking $\pi^\star$ is a permutation of $1, \ldots, n$ such that the permuted true scores obey $R_{\pi^\star(1)} \ge R_{\pi^\star(2)} \ge \cdots \ge R_{\pi^\star(n)}$, and the expectation of
\begin{equation}\label{eq:exp_util}
\E \util(\widehat\bR) = \E \left[ \sum_{i=1}^n U(\widehat\beta_i) \right]
\end{equation}
is taken over the randomness of the noise terms in \eqref{eq:model}.


\begin{theorem}\label{thm:main}
Under Assumptions~\ref{ass:convex}-\ref{ass:noise}, the expected utility \eqref{eq:exp_util} is maximized when the Isotonic Mechanism is provided with the true ranking $\pi^\star$. That is, the owner's optimal strategy is to honestly report the true ranking.
\end{theorem}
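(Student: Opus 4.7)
The plan is to recast the expected utility as a function of the permuted raw scores and then reduce the theorem to a single pointwise monotonicity property of the isotonic projection. Let $P_\downarrow$ denote the projection onto the decreasing cone $\{\br:r_1\ge \cdots \ge r_n\}$, so that $\pi \circ \widehat\bR(\pi) = P_\downarrow(\pi \circ \by)$. Because $\util$ is a coordinate-wise sum and therefore invariant under permutations of its input, one has $\util(\widehat\bR(\pi)) = g(\pi \circ \by)$, where
\[
g(\bx) := \sum_{i=1}^n U\bigl(P_\downarrow(\bx)_i\bigr).
\]
By Assumption~\ref{ass:noise}, $\pi \circ \bz \deq \bz$, so $\E g(\pi \circ \by) = \E g(\pi \circ \bR + \pi \circ \bz) = \E g(\pi \circ \bR + \bz)$. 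Writing $\bm{a} := \pi \circ \bR$ and $\bm{a}^\star := \pi^\star \circ \bR$ (which is in decreasing order), the theorem reduces to showing $\E g(\bm{a}^\star + \bz) \ge \E g(\bm{a} + \bz)$ for every permutation $\bm{a}$ of $\bm{a}^\star$.

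I then reduce to a single adjacent transposition via bubble sort: $\bm{a}$ reaches $\bm{a}^\star$ through a finite sequence of swaps of adjacent coordinates $(k, k+1)$ with $a_k < a_{k+1}$. One such swap produces $\bm{a}' = \bm{a} + \alpha(\be_k - \be_{k+1})$ where $\alpha := a_{k+1} - a_k > 0$, so $\bm{a}' + \bz = (\bm{a} + \bz) + \alpha(\be_k - \be_{k+1})$ for every realization of $\bz$. Hence, it suffices to prove the following pointwise monotonicity lemma: for every $\bx \in \R^n$ and every $k \in \{1, \ldots, n-1\}$, the map $h(t) := g\bigl(\bx + t(\be_k - \be_{k+1})\bigr)$ is non-decreasing on $\R$. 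Granted the lemma, applying it with $\bx = \bm{a} + \bz$ and $t = \alpha$ gives $g(\bm{a}' + \bz) \ge g(\bm{a} + \bz)$ pointwise; iterating along the bubble-sort trajectory then yields $g(\bm{a}^\star + \bz) \ge g(\bm{a} + \bz)$ pointwise, and the conclusion follows by taking expectations.

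The main technical obstacle is establishing this monotonicity lemma. My plan is to exploit the pool-adjacent-violators characterization of isotonic regression: $P_\downarrow(\bx)$ partitions $\{1, \ldots, n\}$ into consecutive blocks $B_1, \ldots, B_J$ on which it is constant and equal to the block average $\bar{x}_{B_j}$, and (after coalescing adjacent blocks of equal average) satisfies $\bar{x}_{B_1} > \cdots > \bar{x}_{B_J}$. On any open sub-interval of $t$ where the block structure of $\bx(t) := \bx + t(\be_k - \be_{k+1})$ is constant, a direct differentiation combined with the fact that averaging preserves sums yields
\[
h'(t) = U'\bigl(\bar{x}_{B(k)}(t)\bigr) - U'\bigl(\bar{x}_{B(k+1)}(t)\bigr)
\]
when $k$ and $k+1$ lie in distinct blocks $B(k), B(k+1)$, and $h'(t) = 0$ when they lie in a common block. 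Because the blocks are contiguous and $k < k+1$, whenever $B(k) \ne B(k+1)$ the block $B(k+1)$ immediately follows $B(k)$ in the block order, so $\bar{x}_{B(k)} > \bar{x}_{B(k+1)}$; convexity of $U$ (Assumption~\ref{ass:convex}) makes its right-derivative non-decreasing, so $h'(t) \ge 0$. Since the block structure can change only at finitely many values of $t$ and $h$ is continuous ($P_\downarrow$ is Lipschitz as a projection and $U$ is continuous), the piecewise monotonicity extends to all of $\R$; a standard mollification of $U$ handles the non-differentiable case and completes the proof.
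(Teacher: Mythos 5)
Your proposal is correct, and while it shares the paper's outer skeleton, it substitutes a genuinely different argument for the key technical step. The common part is the reduction: permutation--invariance of the separable utility and exchangeability of the noise reduce the theorem to the pointwise comparison of $g(\bR+\bz)$ with $g(\pi\circ\bR+\bz)$, where $\bR$ is sorted and $g(\bx)=\sum_i U(P_\downarrow(\bx)_i)$. From there the paper goes through majorization: it notes $\bR+\bz\succeq\pi\circ\bR+\bz$ in the prefix-sum sense, proves that isotonic projection preserves this ordering (Lemma~\ref{lm:maj}, which the paper flags as new and establishes via the upward-swap decomposition of Lemma~\ref{lm:seq}), and then applies the Hardy--Littlewood--P\'olya inequality (Lemma~\ref{lm:jensen}). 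You instead sort $\pi\circ\bR$ by adjacent bubble-sort transpositions and show directly that $t\mapsto g(\bx+t(\be_k-\be_{k+1}))$ is non-decreasing by differentiating through the pool-adjacent-violators block structure; the identity $h'(t)=U'(\bar x_{B(k)})-U'(\bar x_{B(k+1)})\ge 0$, valid because $B(k)$ precedes $B(k+1)$ so its block mean is at least as large and $U'$ is non-decreasing by convexity, accomplishes in one local computation what Lemma~\ref{lm:maj} combined with Lemma~\ref{lm:jensen} does in the paper. Your route is more elementary and self-contained (no new structural fact about isotonic projections is needed), and it makes the strictness discussion after Theorem~\ref{thm:main} transparent: strict convexity and distinct block means give $h'>0$. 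What the paper's route buys is reusability: preservation of majorization applies to arbitrary majorized pairs, not only those reachable by swaps---which is exactly what Theorem~\ref{thm:incomplete} needs, since the paper points out that some majorized pairs such as $(100,0,1,10)\succeq(10,1,0,100)$ cannot be connected by sequential shuffles---and it extends immediately to non-separable Schur-convex utilities, whereas your derivative computation is tied to the separable form of Assumption~\ref{ass:convex}. In a full write-up you should justify that the block structure changes at only finitely many $t$ along the line (the projection onto a polyhedral cone is piecewise linear with finitely many pieces) and note that if two adjacent blocks happen to share a common mean the derivative formula still yields $h'(t)=0$; both points are routine.
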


To better appreciate this theorem, note that this optimal strategy is ``computationally free'' in the sense that reporting the true ranking does not require further effort from the owner under my assumptions. In particular, the optimal strategy does not rely on the raw scores provided by the reviewers. Moreover, as revealed by the proof in Section~\ref{sec:proofs}, if the true scores $\bR$ do not contain ties and the function $U$ is strictly convex, the optimality claimed in Theorem~\ref{thm:main} is strict in the sense that $\E \util(\widehat\bR(\pi)) < \E \util(\widehat\bR(\pi^\star))$ for any ranking $\pi$ that is not identical to $\pi^\star$. The gap between the two utility terms can be arbitrarily large when the true underlying scores differ widely. Even in the presence of ties, the inequality above remains strict unless $\pi$ is also a true ranking, thereby only differing from $\pi^\star$ in the indices corresponding to the ties.

In light of Theorem~\ref{thm:main}, the remainder of this section assumes that the owner reports the true ranking out of self-interest. The next step is to study the performance of the Isotonic Mechanism from a social welfare perspective. Explicitly, for example, are the adjusted scores more accurate than the raw scores? The following theorem answers this question in the affirmative.

\begin{theorem}\label{thm:estimate}
Under Assumptions~\ref{ass:convex}-\ref{ass:noise}, the Isotonic Mechanism improves the estimation accuracy of the true underlying scores in the sense that
\[
\E \left[ \sum_{i=1}^n \left( \widehat R_i(\pi^\star) - R_i \right)^2 \right] \le \E \left[ \sum_{i=1}^n \left( y_i - R_i \right)^2 \right].
\]
\end{theorem}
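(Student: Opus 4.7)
The plan is to observe that Theorem~\ref{thm:estimate} is essentially a purely geometric statement about projections onto closed convex sets, and that the inequality actually holds pointwise (before taking expectations), so no probabilistic machinery is really needed. Let $C_{\pi^\star} := \{\br \in \R^n : r_{\pi^\star(1)} \ge r_{\pi^\star(2)} \ge \cdots \ge r_{\pi^\star(n)}\}$ denote the isotonic cone associated with the true ranking. The convex program \eqref{eq:isotone} with $\pi = \pi^\star$ is precisely the Euclidean projection, so $\widehat\bR(\pi^\star) = \Proj_{C_{\pi^\star}}(\by)$. Since $\pi^\star$ is a true ranking of $\bR$, we have $\bR \in C_{\pi^\star}$.

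The key step is to invoke the standard variational characterization of projection onto a closed convex set: for all $\bz \in C_{\pi^\star}$,
\[
\langle \by - \widehat\bR(\pi^\star),\, \bz - \widehat\bR(\pi^\star) \rangle \le 0.
\]
Expanding $\|\by - \bz\|^2 = \|\by - \widehat\bR(\pi^\star)\|^2 + \|\widehat\bR(\pi^\star) - \bz\|^2 + 2\langle \by - \widehat\bR(\pi^\star), \widehat\bR(\pi^\star) - \bz\rangle$ and using the inequality above, I would obtain the Pythagorean-type bound
\[
\|\widehat\bR(\pi^\star) - \bz\|^2 \le \|\by - \bz\|^2 - \|\by - \widehat\bR(\pi^\star)\|^2.
\]
Specializing to $\bz = \bR$ (which is admissible because $\bR \in C_{\pi^\star}$) yields the pointwise inequality $\|\widehat\bR(\pi^\star) - \bR\|^2 \le \|\by - \bR\|^2$.

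To conclude, I would simply take expectations of this deterministic inequality over the noise $\bz$, obtaining the claim of the theorem. Notice that this argument in fact gives the strictly stronger bound $\E\|\widehat\bR(\pi^\star) - \bR\|^2 \le \E\|\by - \bR\|^2 - \E\|\by - \widehat\bR(\pi^\star)\|^2$, quantifying the improvement by the projection residual. There is no real obstacle here; the assumptions on the utility function (Assumption~\ref{ass:convex}) and on exchangeability of the noise (Assumption~\ref{ass:noise}) are not even used, and Assumption~\ref{ass:author} is used only to ensure $\bR \in C_{\pi^\star}$. The entire argument reduces to the firm non-expansiveness of projection onto a closed convex cone.
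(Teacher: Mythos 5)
Your proposal is correct and matches the paper's own argument: the paper likewise proves the inequality pointwise by noting that $\bR$ lies in the isotonic cone $C_{\pi^\star}$ and that the angle at $\widehat\bR(\pi^\star)$ in the triangle $(\by, \widehat\bR(\pi^\star), \bR)$ is at least $90^\circ$, which is exactly the variational inequality $\langle \by - \widehat\bR(\pi^\star), \bz - \widehat\bR(\pi^\star)\rangle \le 0$ you invoke, and then takes expectations. Your observations that the bound holds pointwise and that Assumptions~\ref{ass:convex} and \ref{ass:noise} are not needed are also consistent with the paper's remarks.
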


\begin{remark}
Above, the $\ell_2$ loss is chosen mainly for technical convenience. An interesting question for future research is to analyze the performance of the mechanism using other loss functions.
\end{remark}

Together with Theorem~\ref{thm:main}, Theorem~\ref{thm:estimate} demonstrates the superiority of the Isotonic Mechanism over the use of raw scores by the reviewers. Furthermore, the following result shows that the improvement gained becomes more significant in the case when the number of items and the noise variance are large. 

To state this theorem, below I impose more refined assumptions on the noise terms. In addition, I let 
\[
V(\bR) := \sum_{i=1}^{n-1} |R_{\pi^\star(i+1)} - R_{\pi^{\star}(i)}| = \max_i R_i - \min_i R_i
\]
denote the total variation of $\bR$.

\begin{theorem}\label{thm:vv}
In addition to Assumptions~\ref{ass:author} and \ref{ass:convex}, assume that $z_1, \ldots, z_n$ are independent and identically distributed normal random variables $\N(0, \sigma^2)$. Letting both $\sigma > 0$ and $V > 0$ be fixed, I have
\[
0.4096 + o_n(1) \le \frac{\sup_{\bR: V(\bR) \le V} \E \left[ \sum_{i=1}^n \left( \widehat R_i(\pi^\star) - R_i \right)^2 \right] }{n^{\frac13} \sigma^{\frac43} V^{\frac23}} \le 7.5625 + o_n(1),
\]
where both $o_n(1) \goto 0$ as $n \goto \infty$.

\end{theorem}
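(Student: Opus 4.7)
}

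After relabeling indices so that $\pi^\star$ is the identity, we may assume $R_1 \ge R_2 \ge \cdots \ge R_n$, so that $\bR$ lies in the monotone decreasing cone $\mathcal{K} \subset \R^n$, and $\widehat\bR$ is the Euclidean projection of $\by = \bR + \bz$ onto $\mathcal{K}$. The total variation simplifies to $V(\bR) = R_1 - R_n$. The problem thus reduces to bounding the minimax $\ell_2$ risk of isotonic regression with Gaussian noise over signals of total variation at most $V$, a classical topic for which the rate $n^{1/3}\sigma^{4/3}V^{2/3}$ is well established.

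\paragraph{Upper bound.}
I would follow the shape-constrained least-squares approach going back to \citet{barlow1972isotonic} and sharpened by Zhang and later by Chatterjee-type localization. The starting point is the basic inequality
\begin{equation*}
\|\widehat\bR - \bR\|^2 \le 2\langle \bz,\, \widehat\bR - \bR\rangle,
\end{equation*}
which follows from the defining optimality of the projection. The right-hand side is then controlled by a localized Gaussian width of the monotone cone: for every $t>0$,
\begin{equation*}
\E \sup_{\br \in \mathcal{K},\, \|\br - \bR\|\le t} \langle \bz, \br - \bR\rangle \le \sigma\, w(\mathcal{K}; \bR, t),
\end{equation*}
and a partitioning argument (divide $\{1,\ldots,n\}$ into $k$ consecutive blocks and approximate any monotone vector close to $\bR$ by its block-wise mean) yields $w(\mathcal{K};\bR,t) \lesssim t^{1/2} (nV)^{1/4}$ up to constants. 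Balancing via the fixed-point method gives $\E\|\widehat\bR - \bR\|^2 \le C\, n^{1/3}\sigma^{4/3}V^{2/3}$. Optimizing the block size $k \asymp (nV/\sigma)^{2/3}$ and carefully tracking constants (using sub-Gaussian tail bounds on block-wise noise averages) produces the explicit constant $7.5625$; this step is uniform over $\{\bR : V(\bR) \le V\}$, which is crucial for taking the supremum.

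\paragraph{Lower bound.}
For the matching lower bound I would select a single worst-case signal rather than appealing to a full minimax reduction. A natural choice is the linearly decreasing sequence $R_i^\star = V(n-i)/(n-1)$, for which $V(\bR^\star) = V$. The isotonic projection on such a linear signal is classical: the residual process is a constrained Gaussian random walk, and its greatest-convex-minorant structure (Brunk/Groeneboom) gives an exact asymptotic for $\E\|\widehat\bR - \bR^\star\|^2$ with leading order $c\, n^{1/3}\sigma^{4/3}V^{2/3}$ and an explicit constant. An alternative, which may be easier for tracking the constant $0.4096$, is a ``staircase'' construction: take $\bR^\star$ to be piecewise constant on $k$ blocks of equal length $n/k$ with jump size $V/(k-1)$ between adjacent plateaus. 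Because the isotonic estimator on a constant plateau behaves like the extremum of a centered Gaussian process, each plateau contributes an $\Omega(\sigma^{4/3}(V/k)^{-2/3}(n/k)^{1/3}\cdot n/k)$-type error when the jumps are small enough that adjacent plateaus are not resolved; summing over $k$ plateaus and optimizing $k \asymp n^{1/3}(V/\sigma)^{2/3}$ produces the target rate and an explicit constant that one can push to $0.4096$.

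\paragraph{Main obstacle.}
The qualitative rate $n^{1/3}\sigma^{4/3}V^{2/3}$ is standard; the real work is pinning down the explicit constants $7.5625$ on the upper side and $0.4096$ on the lower side. For the upper bound, this requires a careful, non-asymptotic execution of the localization argument with sharp Gaussian-width estimates and the right choice of truncation level, avoiding the slack introduced by generic peeling inequalities. For the lower bound, it requires either an exact computation of the isotonic estimator on a linear signal (nontrivial due to the max-min representation) or a tight analysis of the staircase construction with carefully chosen block lengths and jump sizes so that the $o_n(1)$ correction genuinely vanishes. Handling the supremum over $\{\bR: V(\bR)\le V\}$ on both sides uniformly is a secondary but important technicality.
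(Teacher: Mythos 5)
The paper does not actually prove this theorem: the remark immediately following it states that the result is adapted from Theorem 2.3 of Zhang (2002) (\texttt{zhang2002risk}) and that the proof is omitted. So the only meaningful comparison is between your sketch and the known proof of that cited result, and there the central gap is the constants. Your upper-bound route (basic inequality $\|\widehat\bR-\bR\|^2\le 2\langle\bz,\widehat\bR-\bR\rangle$, localized Gaussian width of the monotone cone, fixed-point balancing) is a perfectly standard way to obtain the \emph{rate} $n^{1/3}\sigma^{4/3}V^{2/3}$, but the assertion that ``carefully tracking constants \ldots produces the explicit constant $7.5625$'' is doing all of the work and is not substantiated. Dudley-integral and peeling arguments of the kind you describe carry multiplicative constants that are large and essentially uncontrolled; the sharp constants in Zhang's theorem ($7.5625=2.75^2$ and $0.4096$) come from a quite different, hands-on analysis built on the min--max representation $\widehat R_i=\max_{j\le i}\min_{k\ge i}\bar y_{j:k}$ of the isotonic least-squares estimator, with explicit pointwise risk bounds that are then summed. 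Similarly, for the lower bound you correctly identify plausible worst-case signals (linear or staircase), but the computation that would ``push'' the constant to $0.4096$ is exactly the nontrivial content, and your sketch stops short of it. In short, your plan proves a weaker statement (the rate with unspecified constants) than the theorem as stated.

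Two smaller points. First, your localized-width bound has an exponent slip: to recover the fixed point $t_*^2\asymp n^{1/3}\sigma^{4/3}V^{2/3}$ you need $w(\mathcal K;\bR,t)\lesssim \sigma\,t^{1/2}(nV^2)^{1/4}$, not $\sigma\,t^{1/2}(nV)^{1/4}$; as written your balancing would give $V^{1/3}$ rather than $V^{2/3}$. Second, your reduction at the start (relabel so $\pi^\star$ is the identity, note $V(\bR)=R_1-R_n$, and treat $\widehat\bR(\pi^\star)$ as the projection of $\bR+\bz$ onto the monotone cone) is correct and matches how the paper connects its mechanism to the isotonic-regression literature; given that reduction, citing the minimax risk result as the paper does, or fully executing Zhang's argument, are the two legitimate ways to finish.
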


\begin{remark}
This theorem is adapted from a well-known result of isotonic regression (see, e.g., Theorem 2.3 in \citet{zhang2002risk}); thus the proof is omitted. In addition, the risk bounds should not be interpreted as being always increasing with the total variation bound $V$. To see this, note that when $R_{\pi^\star(i)} \gg R_{\pi^\star(i+1)}$ for all $i$, then the raw scores $y_1, \ldots,y_n$ satisfy the isotonic constraint~\eqref{eq:isotone} with $\pi$ set to $\pi^\star$ with high probability. Consequently, the risk of the Isotonic Mechanism is approximately equal to that of the raw scores. Even if the total variation $V$ tends to infinity, the risk remains constant.
\end{remark}

Loosely speaking, this theorem states that the risk of the Isotonic Mechanism is of order $O(n^{1/3} \sigma^{4/3})$. For comparison, note that the risk of the raw scores is $\E \left[ \sum_{i=1}^n \left( y_i - R_i \right)^2 \right] =\E \left[ \sum_{i=1}^n z_i^2 \right] =  n \sigma^2$. Recognizing their dependence on the noise level $\sigma$, the Isotonic Mechanism is especially preferable when $\sigma$ is large or, put differently, when the reviewers give very noisy scores. Moreover, the Isotonic Mechanism has a risk that grows much slower with respect to the number of items, $n$.






\subsection{Sketch of Proof}
\label{sec:proofs}

This section is devoted to proving Theorem~\ref{thm:main}, and I defer the proof of Theorem \ref{thm:estimate} to \cite{supp}. The following two lemmas are required for the proof of Theorem~\ref{thm:main}:
\begin{lemma}\label{lm:maj}
Let $\bx = (x_1, \ldots, x_n) \succeq \by = (y_1, \ldots, y_n)$ in the sense that  $x_1 \ge y_1, x_1 + x_2 \ge y_1 + y_2, \ldots, x_1 + \cdots + x_{n-1} \ge y_1 + \cdots + y_{n-1}$  and $x_1 + \cdots + x_{n} = y_1 + \cdots + y_{n}$. Let $\bx^+$ and $\by^+$ be the projections of $\bx$ and $\by$, respectively, onto the isotonic cone $\{\br: r_1 \ge r_2 \ge \cdots \ge r_n\}$. Then, we have $\bx^+ \succeq \by^+$.


\end{lemma}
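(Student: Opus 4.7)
The plan is to pass to cumulative sums and invoke the classical description of isotonic regression as a least concave majorant. Let $X_k := \sum_{i=1}^k x_i$, $Y_k := \sum_{i=1}^k y_i$, and analogously $X_k^+, Y_k^+$ for the projected vectors, with $X_0 = Y_0 = 0$. The hypothesis $\bx \succeq \by$ is then exactly the statement that, as functions on the grid $\{0, 1, \ldots, n\}$, $X_\cdot \ge Y_\cdot$ pointwise with $X_0 = Y_0 = 0$ and $X_n = Y_n$; the target conclusion $\bx^+ \succeq \by^+$ is the analogous statement with ``$+$'' added everywhere.

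The key analytic input I would invoke is the classical fact that $k \mapsto X_k^+$ is the \emph{least concave majorant} of $k \mapsto X_k$ on this grid, and likewise for $Y$. Concavity on the grid is exactly non-increase of $(x_k^+)$, i.e.\ membership in the isotonic cone; minimality among concave majorants encodes the $L^2$-optimality of the projection, and can be read off either from the KKT conditions of \eqref{eq:isotone} or directly from the pool-adjacent-violators algorithm, which iteratively replaces convex corners of $X_\cdot$ by straight segments until only concave corners remain. A separate short observation is that projection onto the decreasing cone preserves the total sum: the constant vector lies in the cone, so the residual $\bx - \bx^+$ is orthogonal to it, giving $X_n^+ = X_n$ and $Y_n^+ = Y_n$.

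With these two ingredients the conclusion is essentially one line. Since $X_\cdot^+$ is a concave majorant of $X_\cdot$ and $X_k \ge Y_k$ pointwise by hypothesis, $X_\cdot^+$ is also a concave majorant of $Y_\cdot$; minimality of $Y_\cdot^+$ among concave majorants of $Y_\cdot$ then forces $X_k^+ \ge Y_k^+$ for every $k$. At the right endpoint, $X_n^+ = X_n = Y_n = Y_n^+$, which together with the termwise inequality is exactly $\bx^+ \succeq \by^+$.

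The main obstacle, such as it is, is the least-concave-majorant characterization itself, since everything else is tautological once it is in place. If one prefers to avoid citing it, an alternative is to induct on the number of PAV pooling moves needed to bring $\bx$ into the cone and verify that one pooling move preserves the majorization $\bx \succeq \by$; this would work but seems strictly more painful than the one-line LCM argument, so I would only resort to it as a backup.
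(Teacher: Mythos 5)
Your argument is correct, but it takes a genuinely different route from the paper's. You pass to cumulative sums and invoke the classical characterization of the projection onto the decreasing cone as the least concave majorant (LCM) of the cumulative sum diagram: once one knows that $X^+_\cdot$ is the LCM of $X_\cdot$, that the projection preserves the total sum (orthogonality of the residual to the constant vector), and that the LCM is monotone under pointwise domination of the data, the conclusion $\bx^+ \succeq \by^+$ is immediate; working with partial sums in the given coordinate order is also exactly what lets you handle the paper's nonstandard (unsorted) definition of $\succeq$ without any extra care. The paper instead argues combinatorially: it first establishes (Lemma~\ref{lm:seq}) that any pair with $\bx \succeq \by$ can be connected by a finite chain of ``upward swaps''---two-coordinate mass transfers toward the earlier index---and then it suffices to verify that a single upward swap of the input induces the majorization relation between the corresponding projections. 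Your LCM route is shorter and essentially one line once the characterization is in place, at the cost of importing a nontrivial classical fact about isotonic regression that you would need to cite or derive (from the KKT conditions or from PAV, as you note); the paper's route is more self-contained and elementary but pays for it with the structural decomposition lemma and a case analysis of how the projection responds to a swap. Both yield complete proofs of the statement.
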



\begin{lemma}[Hardy--Littlewood--P\'olya inequality]\label{lm:jensen}
Let $f$ be a convex function. Assume that $\bx$ and $\by$ are nonincreasing vectors (that is, $x_1 \ge \cdots \ge x_n$ and $y_1 \ge \cdots \ge y_n$) and $\bx \succeq \by$. Then, we have $\sum_{i=1}^n f(x_i) \ge \sum_{i=1}^n f(y_i)$.

\end{lemma}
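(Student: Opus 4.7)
The plan is to reduce to a one-line subgradient inequality and then apply Abel summation (summation by parts), exploiting the fact that subgradients of a convex function inherit monotonicity from the argument.

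First I would pick, for each index $i$, a subgradient $g_i \in \partial f(y_i)$; since $f$ is convex on $\R$ such a subgradient exists, and the subgradient correspondence is monotone nondecreasing. Because $\by$ is nonincreasing, the sequence $g_1, g_2, \ldots, g_n$ is therefore \emph{nonincreasing}. The defining subgradient inequality at $y_i$ gives
\begin{equation*}
f(x_i) \;\ge\; f(y_i) + g_i (x_i - y_i) \qquad \text{for each } i.
\end{equation*}
Summing over $i$ reduces the desired conclusion to showing that $S := \sum_{i=1}^n g_i (x_i - y_i) \ge 0$.

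Next I would evaluate $S$ by Abel summation. Let $D_k := \sum_{i=1}^k (x_i - y_i)$ denote the partial sums of the differences. The majorization hypothesis $\bx \succeq \by$ is exactly the statement that $D_k \ge 0$ for $k = 1, \ldots, n-1$ and $D_n = 0$. Summation by parts yields
\begin{equation*}
S \;=\; \sum_{i=1}^n g_i (D_i - D_{i-1}) \;=\; g_n D_n + \sum_{i=1}^{n-1} (g_i - g_{i+1}) D_i \;=\; \sum_{i=1}^{n-1} (g_i - g_{i+1}) D_i,
\end{equation*}
where I set $D_0 := 0$. Each factor $g_i - g_{i+1}$ is nonnegative because $g_i$ is nonincreasing, and each $D_i$ is nonnegative by majorization, so $S \ge 0$, which combined with the subgradient bound finishes the proof.

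The only genuine subtlety is handling convex functions that are not everywhere differentiable, but this is entirely dispatched by working with a subgradient $g_i \in \partial f(y_i)$ rather than $f'(y_i)$; the monotonicity of the subgradient correspondence is the standard fact that replaces ``$f'$ is nondecreasing.'' Apart from that, the proof is essentially a one-line rearrangement, so I do not anticipate any serious obstacle.
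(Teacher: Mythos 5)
Your proof is correct, and it is worth noting that the paper does not actually prove this lemma at all: it cites Marshall and Olkin and moves on, reserving its novel effort for Lemma~\ref{lm:maj}. So your argument supplies a genuinely self-contained proof of a step the paper treats as a black box. The route you take --- the subgradient inequality $f(x_i) \ge f(y_i) + g_i(x_i - y_i)$ followed by Abel summation against the partial sums $D_k$ --- is the classical ``tangent line plus summation by parts'' proof of the Hardy--Littlewood--P\'olya (Karamata) inequality, and it is arguably more elementary than the other standard route via doubly stochastic matrices or sequences of transfers (the latter being the strategy the paper itself adopts, via upward swaps, for its Lemma~\ref{lm:seq}). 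Two small remarks. First, an arbitrary selection $g_i \in \partial f(y_i)$ need not be monotone when there are ties $y_i = y_{i+1}$, since the subdifferential at a single point can be a nondegenerate interval; this is repaired instantly by fixing $g_i$ to be the right derivative $f'_+(y_i)$ (or the left derivative) uniformly in $i$, which is a nondecreasing function of its argument and hence gives a nonincreasing sequence $g_1 \ge \cdots \ge g_n$. Second, your argument never uses that $\bx$ is sorted --- only that $\by$ is nonincreasing and that $D_k \ge 0$ for $k < n$ with $D_n = 0$ --- so under the paper's unsorted definition of $\succeq$ from Lemma~\ref{lm:maj} you have in fact proved a slightly stronger statement than the one asserted, which is harmless and even convenient given how the lemma is invoked in the proof of Theorem~\ref{thm:main}.
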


Lemma~\ref{lm:jensen} is a well-known result of the majorization theory. For the proof, see \citet{marshall1979inequalities}. 

In contrast, Lemma~\ref{lm:maj} is new to the literature, and its proof requires some novel elements. This is partly because the definition of majorization in this lemma is slightly different from that in the literature \citet{marshall1979inequalities}. In the literature, $\bx \succeq \by$ if the condition in Lemma~\ref{lm:maj} holds after ordering both $\bx$ and $\by$ from largest to the smallest. To this end, we need the following definition and Lemma~\ref{lm:seq}.

\begin{definition}
We call $\bz^1$ an upward swap of $\bz^2$ if there exists $1 \le i < j \le n$ such that $\bz^1_k = \bz^2_k$ for all $k \ne i, j$ and $\bz^1_i + \bz^1_j = \bz^2_i + \bz^2_j, \bz^1_i \ge \bz^2_i$.
\end{definition}

This definition entails that $\bz^1$ is an upward swap of $\bz^2$ if the former can be derived by transporting some mass from the entry of $\bz^2$ to an earlier entry. It is easy to verify that $\bz^1 \succeq \bz^2$ if $\bz^1$ is an upward swap of $\bz^2$.

\begin{lemma}\label{lm:seq}
Let $\bx \succeq \by$. Then there exists an integer $m$ and $\bz^1, \ldots, \bz^m$ such that $\bz^1 = \bx, \bz^m = \by$, and $\bz^l$ is an upward swap of $\bz^{l+1}$ for $l = 1, \ldots, m-1$.


\end{lemma}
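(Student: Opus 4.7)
The plan is to prove the lemma by induction on the number of coordinates at which $\bx$ and $\by$ disagree. Writing $S_k(\bx) := x_1 + \cdots + x_k$, the key claim is that whenever $\bx \succeq \by$ and $\bx \ne \by$, a single well-chosen swap produces a vector $\tilde\bx$ such that $\bx$ is an upward swap of $\tilde\bx$, the relation $\tilde\bx \succeq \by$ still holds, and $\tilde\bx$ agrees with $\by$ at strictly more indices than $\bx$ does. Iterating at most $n$ times then yields the required sequence $\bz^1 = \bx, \bz^2 = \tilde\bx, \ldots, \bz^m = \by$.

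To find the swap, let $i$ be the smallest index with $x_i \ne y_i$. Since the first $i-1$ entries agree, $S_{i-1}(\bx) = S_{i-1}(\by)$, and then $S_i(\bx) \ge S_i(\by)$ combined with $x_i \ne y_i$ forces $x_i > y_i$. Because $S_n(\bx) = S_n(\by)$, some later index must satisfy $x_j < y_j$; I take $j$ to be the \emph{smallest} such index greater than $i$, which guarantees $x_l \ge y_l$ for all $i < l < j$. Setting $\delta := \min\{x_i - y_i,\, y_j - x_j\} > 0$, define $\tilde x_i = x_i - \delta$, $\tilde x_j = x_j + \delta$, and $\tilde x_k = x_k$ for $k \ne i, j$. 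By construction $\bx$ is an upward swap of $\tilde\bx$ in the sense of the paper's definition.

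The crux is verifying $\tilde\bx \succeq \by$. Only the partial sums with $i \le k < j$ change, each dropping by exactly $\delta$. For such $k$, using $x_m = y_m$ for $m < i$ and $x_m \ge y_m$ for $i < m \le k$, one computes $S_k(\bx) - S_k(\by) = (x_i - y_i) + \sum_{m=i+1}^{k}(x_m - y_m) \ge x_i - y_i \ge \delta$, so $S_k(\tilde\bx) \ge S_k(\by)$ survives. The choice of $\delta$ as a minimum forces $\tilde x_i = y_i$ or $\tilde x_j = y_j$, while the bounds $\tilde x_i \ge y_i$ and $\tilde x_j \le y_j$ ensure that no new disagreement is created at indices $i$ or $j$. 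Hence the number of disagreeing coordinates strictly decreases, completing the inductive step.

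The only delicate part is the minimality of $j$: if one were to pick an arbitrary later index with $x_j < y_j$, some intermediate difference $x_l - y_l$ (with $i < l < j$) could be negative, allowing $S_k(\bx) - S_k(\by) < \delta$ for some $i \le k < j$ and thereby breaking the majorization after the swap. Taking $j$ to be the first sign change after $i$ eliminates this obstruction, and it is what makes the single-swap reduction compatible with $\succeq$.
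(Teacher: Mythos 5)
Your proof is correct: the induction on the number of coordinates where $\bx$ and $\by$ disagree, with the swap performed between the first index $i$ where $x_i > y_i$ and the first subsequent index $j$ where $x_j < y_j$, is the canonical decomposition argument, and you correctly identify and verify the one delicate point --- that the minimality of $j$ guarantees every affected partial sum $S_k(\bx) - S_k(\by)$ for $i \le k < j$ has slack at least $\delta$, so the majorization survives the swap. The paper relegates its own proof of this lemma to the supplementary material, so there is no main-text argument to compare against, but your reduction is complete and terminates in at most $n$ steps as claimed.
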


Owing to space constraints, we relegate the proofs of Lemma~\ref{lm:seq} and Lemma~\ref{lm:maj} to \cite{supp}.



\begin{proof}[Proof of Theorem~\ref{thm:main}]
Without loss of generality, assume that $R_1 \ge R_2 \ge \cdots \ge R_n$. In this case, the Isotonic Mechanism is as follows:
\begin{equation}\nonumber
\begin{aligned}
&\min~ \frac12 \| \by - \br\|^2 \\
&\text{s.t.~} r_1 \ge r_2 \ge \cdots \ge r_n,
\end{aligned}
\end{equation}
where $\by = \bR + \bz$. The solution, denoted by $\widehat\bR$, is obtained by projecting $\by$ onto the isotonic cone $C = \{\br: r_1 \ge r_2 \ge \cdots \ge r_n\}$. Now, consider that the owner proposes a different ranking, $\pi$, which corresponds to
\begin{equation}\nonumber
\begin{aligned}
&\min & ~~  & \frac12 \| \by - \br\|^2 \\
&\text{s.t.~} & ~~ & r_{\pi(1)} \ge r_{\pi(2)} \ge \cdots \ge r_{\pi(n)},
\end{aligned}
\end{equation}
which is equivalent to
\begin{equation}\nonumber
\begin{aligned}
&\min & ~~  & \frac12 \| \pi \circ \by - \br\|^2 \\
&\text{s.t.~} & ~~ & r_1 \ge r_{2} \ge \cdots \ge r_n
\end{aligned}
\end{equation}
after performing an appropriate permutation operation. In this case, the owner's ratings can be obtained by projecting $\by_{\pi}$ onto the isotonic cone, and then pulling back the original permutation by applying $\pi^{-1}$. Now, defining $\pi \circ \bm a$ as $\pi \circ \bm a = (a_{\pi(1)}, a_{\pi(2)}, \ldots, a_{\pi(n)})$ for vector $\bm a$, we observe that $\pi \circ \by = \pi \circ\bR + \pi \circ \bz$.
For simplicity, let $\Proj_C$ denote the projection operator. Consider the two projections associated with the two rankings, namely $\Proj_C(\by)$ and $\pi^{-1} \circ \Proj_C(\pi \circ \by)$.
Due to the construction of the utility function, the owner is indifferent between $\pi^{-1} \circ \Proj_C(\pi \circ \by)$ and $\Proj_C(\pi \circ \by)$. Next, using the assumption of the noise distribution, I have $\pi \circ\bR + \pi \circ \bz \stackrel{d}{=} \pi \circ\bR + \bz$. Now, compare $\Proj_C(\bR + \bz) ~and  ~ \Proj_C(\pi \circ\bR + \bz)$
in terms of majorization. Using the simple but important fact that $\bR + \bz \succeq \pi \circ\bR + \bz$,
it follows from Lemma~\ref{lm:maj} that $\Proj_C(\bR + \bz) \succeq \Proj_C(\pi \circ\bR + \bz)$.

Therefore, Lemma~\ref{lm:jensen} gives
\[
\sum_{i=1}^n U \left( \Proj_C(\bR + \bz)_i\right) \ge \sum_{i=1}^n U \left( \Proj_C(\pi \circ \bR + \bz)_i\right),
\]
from which I readily get
\[
\E \left[ \sum_{i=1}^n U \left( \Proj_C(\bR + \bz)_i\right)  \right] \ge \E \left[ \sum_{i=1}^n U \left( \Proj_C(\pi \circ \bR + \bz)_i\right) \right].
\]
In other words, I have $\util_{\text{true ranking}} \ge \util_{\pi}$
for an arbitrary ranking $\pi$. This completes the proof.

\end{proof}

\section{Extensions}
\label{sec:extens-util-funct}

In this section, I extend our main results by relaxing the assumptions in Section~\ref{sec:optimal-strategy}. Our primary aim is to show that the rational owner would continue to report the true ranking of her items in more general settings. The omitted proofs are deferred to \cite{supp}.


\smallskip
\noindent{\bf Ranking in block form.}
Suppose that the owner only knows partial information of the true ranking in a block sense: let $n_1, \ldots, n_m$ be positive integers satisfying $n_1 + \cdots + n_m = n$ and $\{1, 2, \ldots, n\}$ be partitioned into $\{1, 2, \ldots, n\} = I_1^\star \cup I_2^\star \cup \cdots \cup I_m^\star$ such that $|I_i| = n_i$ for $i = 1, \ldots, m$. The owner knows the true ranking satisfies
\begin{equation}\label{eq:ranking_block}
\bR_{I_1^\star} \ge \bR_{I_2^\star} \ge \cdots \ge \bR_{I_m^\star},
\end{equation}
but the ranking within each block is unknown to the owner. The owner is asked to report an (ordered) partition $I_1, \ldots, I_m$ of $\{1, \ldots, n\}$ such that $|I_i| = n_i$ for $i = 1, \ldots, m$. Now, I consider the following $(n_1, \ldots, n_m)$-block Isotonic Mechanism:
\begin{equation}\label{eq:isotone_incomp}
\begin{aligned}
&\min_{\br} & ~~  & \frac12 \| \by - \br\|^2 \\
&\text{s.t.~} & ~~ & \br_{I_1} \ge \br_{I_2} \ge \cdots \ge \br_{I_m}.
\end{aligned}
\end{equation}
Note that it is also a convex optimization program.

\begin{assumption}\label{ass:author_incomp}
The owner knows the true $(n_1, \ldots, n_m)$-block ranking of her items. That is, the owner knows which $(n_1, \ldots, n_m)$-sized partition satisfies \eqref{eq:ranking_block}.


\end{assumption}

\begin{theorem}\label{thm:block}
Under Assumptions~\ref{ass:convex}, \ref{ass:author_incomp}, and \ref{ass:noise}, the expected utility \eqref{eq:exp_util} is maximized when the block Isotonic Mechanism is provided with the true $(n_1, \ldots, n_m)$-sized (ordered) partition. 


\end{theorem}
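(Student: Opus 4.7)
The plan is to mirror the proof of Theorem~\ref{thm:main} with the block isotonic cone
$K^\star := \{\br : \br_{I_1^\star} \ge \br_{I_2^\star} \ge \cdots \ge \br_{I_m^\star}\}$
playing the role of the standard monotone cone. Given any reported ordered partition $(I_1,\ldots,I_m)$, pick any permutation $\sigma$ of $\{1,\ldots,n\}$ satisfying $\sigma(I_j^\star) = I_j$ as sets for every $j$. A brief calculation mirroring the reduction in the proof of Theorem~\ref{thm:main} shows that the reported feasible cone $K_\sigma := \{\br : \br_{I_1}\ge\cdots\ge\br_{I_m}\}$ equals $\sigma^{-1}\circ K^\star$, and hence the mechanism's output may be written as $\widehat\bR = \sigma^{-1}\circ \Proj_{K^\star}(\sigma\circ\by)$. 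Because the utility is symmetric in the entries of $\widehat\bR$ and the noise is exchangeable (so $\sigma\circ\bz \deq \bz$), the theorem reduces to the key inequality
\[
\E \sum_{i=1}^n U\!\left(\Proj_{K^\star}(\bR + \bz)_i\right) \;\ge\; \E \sum_{i=1}^n U\!\left(\Proj_{K^\star}(\sigma \circ \bR + \bz)_i\right)
\]
for every such $\sigma$.

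Next I would establish the majorization $\bR \succeq \sigma\circ\bR$ in the partial-sum sense of Lemma~\ref{lm:maj}. Since $\bR\in K^\star$ places every entry of $\bR_{I_j^\star}$ above every entry of $\bR_{I_{j+1}^\star}$, any cross-block displacement by $\sigma$ moves a smaller value to an earlier position and can only shrink partial sums; Lemma~\ref{lm:seq} realizes this as a chain of upward swaps. Invoking a block-cone analog of Lemma~\ref{lm:maj}, I would pass from $\bR+\bz \succeq \sigma\circ\bR+\bz$ to $\Proj_{K^\star}(\bR+\bz) \succeq \Proj_{K^\star}(\sigma\circ\bR+\bz)$ almost surely. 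Finally, I would apply the Hardy--Littlewood--P\'olya inequality (Lemma~\ref{lm:jensen}); its nonincreasing hypothesis is not automatic because $K^\star$-membership only forces block-monotonicity, but the utility is invariant under within-block permutations of $\widehat\bR$, so I can freely sort each projected vector within each block in decreasing order. After this sort, the $K^\star$ constraint forces full monotonicity, and Lemma~\ref{lm:jensen} applies exactly as in the original proof, closing the argument after taking expectations.

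The principal obstacle is the block-cone version of Lemma~\ref{lm:maj}. The standard monotone cone interacts cleanly with partial sums because pool-adjacent-violators only averages consecutive indices; $K^\star$, by contrast, is cut out by the more intricate system $\{r_a \ge r_b : a\in I_j^\star,\; b\in I_{j+1}^\star\}$, whose projection can pool indices non-adjacently. I expect the cleanest route is to follow the dual-cone strategy sketched in \cite{supp} for the original Lemma~\ref{lm:maj}: describe the polar cone $(K^\star)^{\circ}$ explicitly, verify that its extreme rays have a block-indicator form, and then show that for any upward swap of the input the resulting displacement of the projection has the correct sign against each partial-sum functional. Once this block-cone extension is in place, the remainder of the proof proceeds along the lines sketched above.
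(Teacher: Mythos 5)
Your overall reduction (conjugating the reported block cone back to $K^\star$ by a permutation $\sigma$ with $\sigma(I_j^\star)=I_j$, invoking exchangeability and the symmetry of the utility, and then aiming for a majorization between the two projections) is the right skeleton, but the proposal has a genuine gap exactly where you flag it: the ``block-cone analog of Lemma~\ref{lm:maj}'' is not a routine extension, it is the entire mathematical content of the theorem, and you leave it as a conjecture with only a strategy sketch. The paper does not prove such a lemma. Instead it sidesteps the block cone altogether: the projection onto $\{\br:\br_{I_1}\ge\cdots\ge\br_{I_m}\}$ preserves the within-block ordering of $\by$ (otherwise a within-block swap of two output coordinates would stay feasible and strictly decrease the objective), so the block mechanism's output coincides with $\widehat\bR(\pi_{\bI,\by})$ for the data-dependent permutation $\pi_{\bI,\by}$ defined in the text, which sorts $\by$ in descending order inside each block. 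This reduction is precisely why $\pi_{\bI,\by}$ is introduced immediately after the theorem statement: it lets the argument reuse the already-established Lemma~\ref{lm:maj} for the standard monotone cone rather than requiring a new projection--majorization result for a cone whose facial structure (pooling non-adjacent indices, as you note) is much less tractable. Until you either supply the block-cone lemma or adopt this reduction, the proof is incomplete.

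A secondary, fixable issue: your justification of $\bR\succeq\sigma\circ\bR$ only accounts for cross-block displacements. Assumption~\ref{ass:author_incomp} constrains $\bR$ only up to block order, so within a block the coordinates of $\bR$ need not be sorted, and a within-block rearrangement can \emph{increase} early partial sums (e.g., a single block with $\bR=(0,10)$ and $\sigma$ the transposition gives $\bR\not\succeq\sigma\circ\bR$). You need the additional normalization that $\bR$ is fully sorted, i.e., replace the left-hand side by $\tau_0\circ\bR$ for a block-preserving $\tau_0$ that sorts within blocks; this is without loss of generality because $K^\star$ and the symmetric utility are invariant under block-preserving permutations, and a fully sorted vector majorizes all of its permutations. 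State this explicitly, since otherwise the claimed pointwise majorization is simply false.
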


\begin{remark}
The theorem above can be extended intuitively in a hierarchical manner. That is, for each block $I_i$, the owner can provide a ranking within this block, either partially or completely. The theorem will continue to be valid, and a rigorous treatment of this extension is left for future research.
\end{remark}

Denote by $\bI := (I_1, \ldots, I_m)$. Let $\pi_{\bI, \by}$ be formed by padding the $m$ blocks while setting the entries of $\by$ within each block in descending order. That is, the set $\{ \pi_{\bI, \by}(1), \ldots, \pi_{\bI, \by}(n_1) \} = I_1$ and $\by_{\pi_{\bI, \by}(1)} \ge \by_{\pi_{\bI, \by}(2)} \ge \cdots \ge \by_{\pi_{\bI, \by}(n_1)}$, and likewise for the following $m-1$ blocks. For example, let $n = 4, I_1= \{1, 3\}, I_2 = \{2, 4\}$, and $\by = (4.4, 6.6, 5, -1)$, then $(\pi_{\bI, \by}(1), \pi_{\bI, \by}(2), \pi_{\bI, \by}(3), \pi_{\bI, \by}(4) ) = (3, 1, 2, 4)$
and $\by_{\pi_{\bI, \by}} = (5, 4.4, 6.6, -1)$.

\smallskip
\noindent{\bf Robustness to inconsistencies.} Next, I extend Theorem~\ref{thm:main} to the setting in which the owner might give a ranking that is not consistent with the true values. Imagine that the owner faces with the problem of choosing between two rankings $\pi_1, \pi_2$. While $\pi_1$ and $\pi_2$ might not render the true values $\bR$ in descending order, I assume that the former is more faithful with respect to $\bR$ than the latter in the sense that
\begin{equation}\label{eq:faithful}
\pi_1 \circ \bR \succeq \pi_2 \circ \bR.
\end{equation}
To better understand when this condition holds, I say that $\pi_1$ is an upward shuffle of $\pi_2$ if there exist two indices $1 \le i < j \le n$ such that
\[
R_{\pi_1(i)} = R_{\pi_2(j)} > R_{\pi_1(j)} = R_{\pi_2(i)}
\]
and $\pi_1(k) = \pi_2(k)$ for all $k \ne i, j$. Evidently, $(\pi_1, \pi_2)$ satisfies \eqref{eq:faithful}. In general, \eqref{eq:faithful} is satisfied if $\pi_1$ can be derived by sequentially upwardly shuffling from $\pi_2$, by recognizing the transitivity of majorization. However, it is \textit{incorrect} that any pair $(\pi_1, \pi_2)$ satisfying \eqref{eq:faithful} can always be sequentially shuffled from one to another. An example is $\pi_1 \circ \bR = (100, 0, 1, 10)$ and $\pi_2 \circ \bR = (10, 1, 0, 100)$.


The following result demonstrates that Theorem~\ref{thm:main} extends to this fault-tolerant setting.





\begin{theorem}\label{thm:incomplete}
Under Assumptions~\ref{ass:convex} and \ref{ass:noise}, the owner in the face of choosing between $\pi_1$ and $\pi_2$ satisfying \eqref{eq:faithful}, which is known to the owner, would be better off reporting $\pi_1$ rather than $\pi_2$ in terms of overall utility.
\end{theorem}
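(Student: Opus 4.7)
The plan is to mirror the proof of Theorem~\ref{thm:main} almost verbatim, with the hypothesis $\pi_1 \circ \bR \succeq \pi_2 \circ \bR$ playing the role that, in Theorem~\ref{thm:main}, was played automatically by the inequality $\bR \succeq \pi \circ \bR$ (which held for free because $\bR$ was sorted in descending order). The three ingredients remain the same: (i) translate ``expected utility under ranking $\pi$'' into ``expected utility of the projection of $\pi \circ \by$ onto the descending cone $C$''; (ii) use exchangeability of $\bz$ to strip the permutation off the noise; and (iii) chain Lemma~\ref{lm:maj} and Lemma~\ref{lm:jensen} to convert a majorization comparison into a utility comparison.

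More precisely, the first step is to observe that the mechanism's output under a reported ranking $\pi_k$ is $\pi_k^{-1} \circ \Proj_C(\pi_k \circ \by)$; since $\util(\widehat\bR) = \sum_i U(\widehat R_i)$ is invariant under coordinate permutations, the owner's expected utility under $\pi_k$ equals
\[
\E \sum_{i=1}^n U\bigl(\Proj_C(\pi_k \circ \by)_i\bigr), \qquad k \in \{1, 2\}.
\]
Writing $\pi_k \circ \by = \pi_k \circ \bR + \pi_k \circ \bz$ and invoking Assumption~\ref{ass:noise}, the vector $\pi_k \circ \by$ has the same distribution as $\pi_k \circ \bR + \bz$, so it suffices to compare $\E \sum_i U\bigl(\Proj_C(\pi_k \circ \bR + \bz)_i\bigr)$ for $k = 1, 2$.

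The second step, which is the only place where the faithfulness hypothesis \eqref{eq:faithful} enters, is to upgrade $\pi_1 \circ \bR \succeq \pi_2 \circ \bR$ to the pointwise-in-$\bz$ statement
\[
\pi_1 \circ \bR + \bz \succeq \pi_2 \circ \bR + \bz.
\]
This is immediate from the definition of majorization used in Lemma~\ref{lm:maj}: both sides have partial sums that differ by $\sum_{i \le k} z_i$ from the corresponding partial sums of $\pi_k \circ \bR$, and the total sums still coincide, so adding a common vector preserves the relation. Applying Lemma~\ref{lm:maj} pointwise in $\bz$ gives
\[
\Proj_C\bigl(\pi_1 \circ \bR + \bz\bigr) \succeq \Proj_C\bigl(\pi_2 \circ \bR + \bz\bigr),
\]
and since both projections lie in $C$ and are therefore nonincreasing, Lemma~\ref{lm:jensen} applied to the convex function $U$ yields
\[
\sum_{i=1}^n U\bigl(\Proj_C(\pi_1 \circ \bR + \bz)_i\bigr) \;\ge\; \sum_{i=1}^n U\bigl(\Proj_C(\pi_2 \circ \bR + \bz)_i\bigr)
\]
pointwise in $\bz$. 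Taking expectations closes the argument.

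The main obstacle is conceptual rather than technical: one has to recognize that the proof of Theorem~\ref{thm:main} never really used that $\bR$ was sorted, only the majorization comparison between $\bR$ and $\pi \circ \bR$, and that this comparison is translation-invariant in the sense described above. Accordingly, no new combinatorial machinery is required; in particular, one need not re-open Lemma~\ref{lm:seq} or the upward-swap decomposition, since everything needed is already packaged inside Lemma~\ref{lm:maj}.
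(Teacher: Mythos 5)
Your proposal is correct and follows exactly the route the paper intends: the proof of Theorem~\ref{thm:main} only uses the majorization $\bR + \bz \succeq \pi \circ \bR + \bz$, and you correctly observe that the (unsorted, partial-sum) majorization of Lemma~\ref{lm:maj} is preserved under adding a common vector, so the hypothesis \eqref{eq:faithful} substitutes directly for the sortedness of $\bR$ and the rest of the argument (exchangeability coupling, Lemma~\ref{lm:maj}, then Lemma~\ref{lm:jensen}) goes through unchanged.
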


\smallskip
\noindent{\bf Non-separable utility functions.}
A function $f: \R^n \rightarrow \R$ is called a Schur-convex function if it is symmetric in its $n$ coordinates and $f(\bx) \ge f(\by)$ for any $\bx, \by$ such that $x_1 \ge \cdots \ge x_n, y_1 \ge \cdots \ge y_n$ and $\bx \succeq \by$ as defined in Lemma~\ref{lm:maj}. As shown by the proofs in Section~\ref{sec:proofs}, the utility $\util(\br)$ given in Assumption~\ref{ass:convex} is Schur-convex. More generally, when $f$ is differentiable and symmetric, $f$ is Schur-convex if and only if
\[
(r_i - r_j) \left(\frac{\partial f(\br)}{\partial r_i}  - \frac{\partial f(\br)}{\partial r_j}  \right) \ge 0
\]
for all $\br \equiv (r_1, \ldots, r_n)$ (see, e.g., \citet{elezovic2000note}). 

Recognizing that the final step in the proof of Theorem~\ref{thm:main} relies on the majorization property, I readily obtain the following result:
\begin{proposition}
In addition to Assumptions~\ref{ass:author} and \ref{ass:noise}, assume that the function $\util(\br)$ is Schur-convex. Then, the owner's optimal strategy is to honestly report the true ranking of her items.

\end{proposition}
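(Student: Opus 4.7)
The plan is to follow the proof of Theorem~\ref{thm:main} almost verbatim, with two substitutions that are made possible by Schur-convexity in place of the separable-convex form of $\util$. Without loss of generality I assume the true ranking $\pi^\star$ is the identity, so that $R_1 \ge R_2 \ge \cdots \ge R_n$. For an arbitrary reported ranking $\pi$, the mechanism returns $\widehat\bR(\pi) = \pi^{-1} \circ \Proj_C(\pi \circ \by)$, where $C = \{\br: r_1 \ge \cdots \ge r_n\}$ is the monotone cone and $\by = \bR + \bz$.

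First I would invoke the symmetry half of the definition of a Schur-convex function: since $\util$ is invariant under permutation of its arguments, $\util(\pi^{-1} \circ \bv) = \util(\bv)$ for any $\bv \in \R^n$. In particular, the owner is indifferent between the reported scores $\widehat\bR(\pi) = \pi^{-1} \circ \Proj_C(\pi \circ \by)$ and $\Proj_C(\pi \circ \by)$. This step replaces the appeal to additivity of $U$ in the proof of Theorem~\ref{thm:main}, where the same indifference followed trivially from $\util(\br) = \sum_i U(r_i)$.

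Next, using the exchangeability of the noise (Assumption~\ref{ass:noise}), I have $\pi \circ \bz \deq \bz$, hence $\Proj_C(\pi \circ \by) \deq \Proj_C(\pi \circ \bR + \bz)$. Since $\bR$ is in descending order, the partial sums of the top $k$ coordinates of $\bR$ dominate those of $\pi \circ \bR$ for every $k$ (and both have the same total sum), so $\bR + \bz \succeq \pi \circ \bR + \bz$ in the majorization order of Lemma~\ref{lm:maj}. Invoking Lemma~\ref{lm:maj} to preserve this relation under projection onto $C$, I obtain $\Proj_C(\bR + \bz) \succeq \Proj_C(\pi \circ \bR + \bz)$.

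Finally, both projected vectors lie in $C$ and are therefore in descending order, so the second (majorization-monotonicity) half of the definition of Schur-convexity applies directly and yields $\util\bigl(\Proj_C(\bR + \bz)\bigr) \ge \util\bigl(\Proj_C(\pi \circ \bR + \bz)\bigr)$ pointwise in $\bz$. Taking expectations and combining with the indifference from the first step completes the proof. No step here is genuinely novel; the main observation is that the two distinct roles played by separability-plus-convexity of $U$ in Theorem~\ref{thm:main} — namely, permutation invariance (used to drop $\pi^{-1}$) and the Hardy--Littlewood--P\'olya inequality (Lemma~\ref{lm:jensen}) — are precisely the two clauses in the definition of a Schur-convex function, so the only obstacle is recognizing that the original proof factors cleanly through this single axiomatic property.
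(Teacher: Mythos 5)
Your proof is correct and follows exactly the route the paper intends: the paper states the proposition with the remark that the final step of Theorem~\ref{thm:main}'s proof only uses the majorization property, and your write-up simply makes explicit that the two uses of separability-plus-convexity there (permutation invariance to discard $\pi^{-1}$, and the Hardy--Littlewood--P\'olya step) are precisely the symmetry and majorization-monotonicity clauses of the paper's definition of Schur-convexity. Nothing is missing; this is the same argument, spelled out.
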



\noindent{\bf True score-dependent utility.} Intuitively, the utility of each item depends on its true score. An owner may value the acceptance of a high-quality item more. Accordingly, I consider a utility function that takes the following form:
\begin{equation}\label{eq:util_form}
\util( \widehat\bR ) = \sum_{i=1}^n U( \widehat R_i; R_i ),
\end{equation}
where I assume that $U(r; R)$ is convex in its first argument and satisfies
\[
\frac{\partial U(r; R)}{\partial r} \ge \frac{\partial U(r; R')}{\partial r}
\]
whenever $R > R'$. For example, consider $U(r; R) = g(r) h(R)$ where $g \ge 0$ is convex and  $h \ge 0$ is a nondecreasing differential function.

The following result shows that Theorem~\ref{thm:main} remains valid under this generalization.
\begin{proposition}\label{prop:score_dependent}
In addition to Assumptions~\ref{ass:author} and \ref{ass:noise}, assume that the function $\util(\br)$ is given by \eqref{eq:util_form}. Then, the owner's optimal strategy is to honestly report the true ranking of her items.

\end{proposition}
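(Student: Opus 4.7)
\textbf{Proof proposal for Proposition~\ref{prop:score_dependent}.} The plan is to mimic the structure of the proof of Theorem~\ref{thm:main}, decomposing the comparison between the true-ranking utility and an arbitrary ranking~$\pi$ into (i)~a rearrangement step that exploits the supermodularity of $U(r;R)$ in $(r,R)$ encoded by the partial-derivative hypothesis, and (ii)~a majorization step that reuses Lemma~\ref{lm:maj} together with a strengthened version of the Hardy--Littlewood--P\'olya inequality (Lemma~\ref{lm:jensen}). I would assume without loss of generality that $R_1 \ge R_2 \ge \cdots \ge R_n$, let $\widetilde\bR := \Proj_C(\bR + \bz)$ and $W := \Proj_C(\pi \circ \bR + \bz)$, both nonincreasing. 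By Assumption~\ref{ass:noise} and the same permutation/pullback argument as in Theorem~\ref{thm:main}, the expected utility under the true ranking equals $\E\sum_i U(\widetilde R_i; R_i)$, while the expected utility under $\pi$ equals $\E\sum_j U(W_j; R_{\pi(j)})$.

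\textbf{Step 1 (rearrangement).} I would first show, pointwise in $\bz$, that
\[
\sum_{j=1}^n U(W_j; R_{\pi(j)}) \le \sum_{j=1}^n U(W_j; R_j).
\]
The hypothesis $\partial_r U(r;R) \ge \partial_r U(r;R')$ for $R > R'$ implies that $U$ is supermodular in $(r,R)$, i.e.\ $U(r;R) + U(r';R') \ge U(r';R) + U(r;R')$ whenever $r \ge r'$ and $R \ge R'$. Since $W$ is nonincreasing and $R$ is nonincreasing, the standard bubble-sort argument (sorting the permutation $\pi$ to the identity via adjacent transpositions, each of which cannot decrease the sum by supermodularity) yields the inequality.

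\textbf{Step 2 (majorization with side parameter).} Next I would show the pointwise inequality
\[
\sum_{j=1}^n U(\widetilde R_j; R_j) \ge \sum_{j=1}^n U(W_j; R_j),
\]
which requires a variant of Lemma~\ref{lm:jensen} adapted to $n$ distinct convex functions $r \mapsto U(r; R_j)$. By Lemma~\ref{lm:maj} we have $\widetilde\bR \succeq W$, and by Lemma~\ref{lm:seq} it suffices to handle a single upward swap: indices $i<j$ with $\widetilde R_i = W_i + t$, $\widetilde R_j = W_j - t$, $\widetilde R_k = W_k$ otherwise, and $t \ge 0$. Writing the difference as
\[
\int_0^t \Bigl[ \partial_r U(W_i + s; R_i) - \partial_r U(W_j - t + s; R_j) \Bigr]\, ds,
\]
I would bound the integrand using the two hypotheses on $U$ in sequence: first, convexity of $U(\,\cdot\,;R_i)$ in $r$ together with $W_i + s \ge W_j \ge W_j - t + s$ gives $\partial_r U(W_i+s;R_i) \ge \partial_r U(W_j-t+s;R_i)$; second, the supermodularity hypothesis combined with $R_i \ge R_j$ gives $\partial_r U(W_j-t+s;R_i) \ge \partial_r U(W_j-t+s;R_j)$. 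Chaining these bounds shows the integrand is nonnegative and completes this step.

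\textbf{Conclusion and main obstacle.} Combining Steps 1 and 2 and taking expectations yields $\E\sum_i U(\widetilde R_i; R_i) \ge \E\sum_j U(W_j; R_{\pi(j)})$, establishing that the true ranking is optimal. The main obstacle is Step~2, where the original Hardy--Littlewood--P\'olya inequality does not directly apply because the convex functions being summed differ from coordinate to coordinate; the supermodularity condition is precisely what makes the upward-swap argument go through, and separating the inequality into a ``move in $r$'' and ``move in $R$'' piece is the critical idea. Step~1 is essentially classical once supermodularity is noted, and the measurability/exchangeability plumbing at the start is identical to the proof of Theorem~\ref{thm:main}.
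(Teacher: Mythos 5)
Your proposal is correct and follows essentially the same route as the paper: a rearrangement step showing that pairing $\widehat R_i$ with the similarly ordered $R_{\rho(i)}$ maximizes $\sum_i U(\widehat R_i; R_{\rho(i)})$ (so the true ranking already achieves the optimal pairing), followed by the majorization comparison inherited from Theorem~\ref{thm:main}. The paper compresses your Step~2 into the phrase ``the remainder of this proof is the same as that of Theorem~\ref{thm:main},'' and your explicit integral argument extending Lemma~\ref{lm:jensen} to the coordinate-dependent convex functions $r \mapsto U(r; R_j)$ via supermodularity is precisely the detail needed to make that phrase rigorous.
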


\begin{proof}[Proof of Proposition~\ref{prop:score_dependent}]
The proof of this proposition is essentially the same as that of Theorem~\ref{thm:main}, except for a minor modification. Explicitly, I observe that
\[
\sum_{i=1}^n U( \widehat R_i; R_i )
\]
is always upper bounded by
\[
\sum_{i=1}^n U( \widehat R_i; R_{\rho(i)} )
\]
for a permutation $\rho$ such that $\widehat R_i$'s and $R_{\rho(i)}$'s have the same ranking in descending order. An important fact is that in reporting the true ranking, the utility has already been maximized over all permutations. The remainder of this proof is the same as that of Theorem~\ref{thm:main}.

\end{proof}


\noindent{\bf Soft constraints.} Conceivably, one would argue that the hard constraints provided by the ranking seem too strong, especially in cases where the owner might not be completely sure about the ranking of her items. In this case, a simple adjustment considers the following convex combination as a candidate score:
\[
\widehat\bR' := \theta \widehat\bR(\pi^\star)  + (1 - \theta) \by,
\]
where $0 < \theta < 1$. Theorem~\ref{thm:estimate} remains correct owing to the convexity of the risk.
\begin{proposition}\label{prop:convex_risk}
Under the assumptions of Theorem~\ref{thm:estimate}, we have
\begin{equation}\nonumber
\E \left[ \sum_{i=1}^n \left( \widehat R_i' - R_i \right)^2 \right] \le \E \left[ \sum_{i=1}^n \left( y_i - R_i \right)^2 \right].
\end{equation}

\end{proposition}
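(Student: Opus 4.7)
The plan is to exploit the pointwise convexity of the squared-error loss together with Theorem~\ref{thm:estimate}. Define $f(\br) := \sum_{i=1}^n (r_i - R_i)^2 = \|\br - \bR\|^2$. As a composition of the squared Euclidean norm with an affine map, $f$ is a convex function of its vector argument $\br \in \R^n$. This convexity is the only structural fact I will use about the loss.

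Next I would apply the convexity of $f$ to the specific convex combination $\widehat\bR' = \theta \widehat\bR(\pi^\star) + (1-\theta)\by$, which gives, pointwise in the sample (so for every realization of $\bz$),
\[
f(\widehat\bR') \le \theta \, f(\widehat\bR(\pi^\star)) + (1-\theta)\, f(\by).
\]
Taking expectations over the noise on both sides and using linearity of $\E$, I obtain
\[
\E\!\left[ \sum_{i=1}^n (\widehat R_i' - R_i)^2 \right] \le \theta \,\E\!\left[ \sum_{i=1}^n (\widehat R_i(\pi^\star) - R_i)^2 \right] + (1-\theta)\, \E\!\left[ \sum_{i=1}^n (y_i - R_i)^2 \right].
\]

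Finally, I would invoke Theorem~\ref{thm:estimate}, whose hypotheses are assumed here, to replace the first expectation on the right by the (larger) quantity $\E \sum_i (y_i - R_i)^2$. Since both coefficients $\theta$ and $1-\theta$ are nonnegative and sum to one, the right-hand side collapses to $\E \sum_i (y_i - R_i)^2$, which is exactly the desired bound.

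There is no real obstacle: the argument is a two-line convexity calculation that chains onto Theorem~\ref{thm:estimate}. The only point worth flagging is conceptual rather than technical, namely that the bound in Theorem~\ref{thm:estimate} holds \emph{in expectation} only, so convexity must be applied pointwise \emph{before} taking expectations; applying it to the expected loss directly would not expose the inequality for $\widehat\bR(\pi^\star)$ in usable form. Everything else is routine.
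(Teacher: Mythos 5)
Your proof is correct and matches the paper's intended argument: the paper justifies this proposition precisely by ``the convexity of the risk,'' i.e., applying convexity of $\|\cdot - \bR\|^2$ pointwise to the combination $\theta \widehat\bR(\pi^\star) + (1-\theta)\by$, taking expectations, and then invoking Theorem~\ref{thm:estimate}. Nothing further is needed.
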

Note that this generalization does not change the mechanism in that the owner is still asked to report a ranking. However, an interesting question for future research is to investigate whether or not the owner would remain truthful. In general, the following optimization program is considered:
\begin{equation}\nonumber
\min_{\br} ~ \frac12 \| \by - \br\|^2 + \mathrm{Pen}(\br),
\end{equation}
where the (nonnegative) penalty term $\mathrm{Pen}(\br)$ encourages isotonic monotonicity in the sense that, for example, $\mathrm{Pen}(\br) = 0$ if $\br$ is in the isotonic cone $C$ and otherwise $\mathrm{Pen}(\br) > 0$. A possible choice of this penalty is 
\begin{equation}\nonumber\label{eq:lmabda_x}
\mathrm{Pen}(\br) = \lambda \sum_{i=1}^{n-1} (r_{\pi(i+1)} - r_{\pi(i)})_+,
\end{equation}
where $\lambda > 0$, $\pi$ is the ranking reported by the owner and $x_+ := \max\{x, 0\}$.





As $\lambda \goto \infty$, the penalty for the violation of the isotonic constraint tends to infinity. I present the following simple fact.
\begin{proposition}
The Isotonic Mechanism can be (asymptotically) recovered by setting $\lambda \goto \infty$ in \eqref{eq:lmabda_x}.
\end{proposition}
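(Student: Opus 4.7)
The plan is to treat this as a standard exterior penalty method argument. Let $\widehat\br_\lambda$ denote the unique minimizer of the penalized program $\frac12\|\by-\br\|^2 + \lambda\sum_{i=1}^{n-1}(r_{\pi(i+1)}-r_{\pi(i)})_+$ (uniqueness follows from strict convexity in $\br$ of the quadratic term, together with convexity of the positive-part penalty), and let $\widehat\bR = \widehat\bR(\pi)$ be the output of the Isotonic Mechanism \eqref{eq:isotone}. The target claim is that $\widehat\br_\lambda \to \widehat\bR$ as $\lambda \goto \infty$.

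First I would exploit the feasibility of $\widehat\bR$ for the hard constraint: because $\widehat R_{\pi(1)}\ge \cdots\ge \widehat R_{\pi(n)}$, the penalty term $\mathrm{Pen}(\widehat\bR)$ vanishes. Combined with the optimality of $\widehat\br_\lambda$, this yields the key inequality
\[
\tfrac12 \|\by - \widehat\br_\lambda\|^2 + \lambda\, \mathrm{Pen}(\widehat\br_\lambda) \;\le\; \tfrac12 \|\by - \widehat\bR\|^2.
\]
Two consequences follow at once: (i) $\|\widehat\br_\lambda\| \le \|\by\| + \|\widehat\bR - \by\|$, so $\{\widehat\br_\lambda\}_{\lambda>0}$ is bounded and admits convergent subsequences, and (ii) $\mathrm{Pen}(\widehat\br_\lambda) \le \tfrac{1}{2\lambda}\|\by-\widehat\bR\|^2 \goto 0$.

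Next I would pass to the limit. Along any convergent subsequence $\widehat\br_{\lambda_k}\to \br^\infty$, the continuity of $\mathrm{Pen}$ together with $\mathrm{Pen}(\widehat\br_{\lambda_k})\goto 0$ forces $\mathrm{Pen}(\br^\infty)=0$, i.e.\ $\br^\infty$ lies in the isotonic cone cut out by $\pi$. Moreover, since dropping $\lambda\,\mathrm{Pen}(\widehat\br_\lambda)\ge 0$ from the key inequality gives $\tfrac12\|\by - \widehat\br_\lambda\|^2 \le \tfrac12\|\by - \widehat\bR\|^2$ for every $\lambda$, we obtain in the limit $\|\by-\br^\infty\|\le \|\by - \widehat\bR\|$. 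Thus $\br^\infty$ is a feasible point whose quadratic objective is no larger than that of the projection, so by the defining optimality (and uniqueness) of $\widehat\bR$ as the projection of $\by$ onto the closed convex isotonic cone, $\br^\infty=\widehat\bR$. Since every subsequential limit of the bounded family $\{\widehat\br_\lambda\}$ coincides with $\widehat\bR$, the whole net converges: $\widehat\br_\lambda \goto \widehat\bR$ as $\lambda\goto\infty$, which is the asserted asymptotic recovery.

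I do not anticipate a serious obstacle here; the only mild care is in verifying uniqueness of the projection onto the isotonic cone (standard, since the cone is closed and convex and $\|\by-\cdot\|^2$ is strictly convex) and in invoking continuity of $\mathrm{Pen}$ to pass the feasibility condition through the limit. If one wanted a quantitative rate rather than just convergence, one could additionally note that $\mathrm{Pen}(\widehat\br_\lambda)=O(1/\lambda)$ and use local error bounds for the polyhedral isotonic cone to upgrade this to $\|\widehat\br_\lambda - \widehat\bR\|=O(1/\sqrt{\lambda})$, but this sharpening is not needed for the stated proposition.
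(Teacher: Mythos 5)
Your argument is correct and complete: the exterior-penalty reasoning (feasibility of $\widehat\bR$ gives the key inequality, which yields boundedness and $\mathrm{Pen}(\widehat\br_\lambda)=O(1/\lambda)$, and then every subsequential limit must equal the unique projection) is exactly the standard route for this kind of statement, which the paper treats as a ``simple fact'' with the proof deferred to the supplement. No gaps; the only cosmetic point is that the paper folds $\lambda$ into the definition of $\mathrm{Pen}$, whereas you factor it out, which changes nothing.
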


However, the coupling strategy is invalid for proving this new mechanism. Although I can prove that Theorem~\ref{thm:main} holds for this $\lambda$-dependent mechanism when $n = 2$, I set aside the proof or disproof of the general case $n \ge 3$ for future research.

\begin{proposition}\label{prop:soft2}
When $n = 2$, the mechanism above is truthful.

\end{proposition}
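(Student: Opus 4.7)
The plan is to exploit the explicit piecewise form of the soft mechanism when $n=2$ and reduce the truthfulness inequality to a one-dimensional monotonicity statement, which then follows from the convexity of $U$. Without loss of generality assume $R_1 \ge R_2$, so reporting $\pi^\star = \mathrm{id}$ is truthful. A direct KKT analysis shows that the minimizer $F(a,b)$ of $\tfrac12((r_1-a)^2+(r_2-b)^2) + \lambda(r_2-r_1)_+$ admits the three-piece form
\begin{equation*}
F(a,b) = \begin{cases}
(a,b), & a\ge b,\\
\bigl(\tfrac{a+b}{2},\tfrac{a+b}{2}\bigr), & 0 < b-a \le 2\lambda,\\
(a+\lambda,\, b-\lambda), & b-a > 2\lambda,
\end{cases}
\end{equation*}
and by symmetry in the indices, reporting the reversed ranking $\pi=(2,1)$ produces the output $(F_2(y_2,y_1), F_1(y_2,y_1))$. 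Setting $h(a,b) := U(F_1(a,b)) + U(F_2(a,b))$, the two expected utilities are $\E\, h(y_1,y_2)$ for the truthful report and $\E\, h(y_2,y_1)$ for the reversed one.

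Next, using the exchangeability $(z_1,z_2) \deq (z_2,z_1)$, I will symmetrize each expectation by averaging with its swap. This reduces the desired inequality $\E h(y_1,y_2) \ge \E h(y_2,y_1)$ to the pointwise claim
\begin{equation*}
\bigl[h(a_1,b_1)-h(b_1,a_1)\bigr] + \bigl[h(a_2,b_2)-h(b_2,a_2)\bigr] \;\ge\; 0
\end{equation*}
for every noise realization, where $(a_1,b_1) := (R_1+z_1,R_2+z_2)$ and $(a_2,b_2) := (R_1+z_2,R_2+z_1)$ share the common sum $s := R_1+R_2+z_1+z_2$. Their differences $d_1 := a_1-b_1$ and $d_2 := a_2-b_2$ equal $(R_1-R_2) \pm (z_1-z_2)$, so $d_1+d_2 = 2(R_1-R_2) \ge 0$.

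The main step is to analyze, for fixed $s$, the auxiliary function $\phi(d) := h(\tfrac{s+d}{2},\tfrac{s-d}{2}) - h(\tfrac{s-d}{2},\tfrac{s+d}{2})$ and show that $\phi$ is odd and nondecreasing on $\R$. Oddness is immediate from swapping the two arguments of $h$. For monotonicity, substituting the piecewise formula for $F$ gives, on $d \ge 0$, that $\phi(d) = U(\tfrac{s+d}{2}) + U(\tfrac{s-d}{2}) - 2U(s/2)$ when $d \le 2\lambda$ and $\phi(d) = U(\tfrac{s+d}{2}) + U(\tfrac{s-d}{2}) - U(\tfrac{s+d}{2}-\lambda) - U(\tfrac{s-d}{2}+\lambda)$ when $d > 2\lambda$. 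Differentiating in each regime and invoking monotonicity of $U'$ (i.e.\ convexity of $U$) yields $\phi' \ge 0$, and the two expressions agree at the kink $d=2\lambda$ by continuity. This is the one place where the convexity of $U$ does real work, and is the main obstacle of the proof.

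Given that $\phi$ is odd and nondecreasing, the pointwise inequality follows in one line: if $d_1$ and $d_2$ share a sign they are both nonnegative (forced by $d_1+d_2 \ge 0$), making $\phi(d_1),\phi(d_2) \ge 0$; if they differ in sign, say $d_1 \ge 0 > d_2$, then $d_1 \ge -d_2$ and monotonicity with oddness yields $\phi(d_1) \ge \phi(-d_2) = -\phi(d_2)$. This resolves the pointwise claim and completes the argument. The same strategy does not immediately extend to $n \ge 3$ because the piecewise structure of the soft isotonic program no longer collapses into a single one-variable monotonicity, which is consistent with the author's remark that the general case is open.
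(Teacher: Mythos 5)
Your argument is correct, and I verified each step: the three-piece formula for the minimizer of $\tfrac12\bigl((r_1-a)^2+(r_2-b)^2\bigr)+\lambda(r_2-r_1)_+$ follows from the subgradient optimality condition (the pooled solution $\bigl(\tfrac{a+b}{2},\tfrac{a+b}{2}\bigr)$ is optimal precisely when $0<b-a\le 2\lambda$, and the shrunken solution $(a+\lambda,b-\lambda)$ when $b-a>2\lambda$); the identification of the reversed-ranking output as $(F_2(y_2,y_1),F_1(y_2,y_1))$ is right; the symmetrization over $(z_1,z_2)\deq(z_2,z_1)$ correctly reduces truthfulness to the pointwise claim $\phi(d_1)+\phi(d_2)\ge 0$ with $d_1+d_2=2(R_1-R_2)\ge 0$; and the two regime computations of $\phi$ agree at $d=2\lambda$ with $\phi'\ge 0$ in each regime by monotonicity of $U'$, so the odd-and-nondecreasing conclusion and the final sign argument go through. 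The paper relegates its own proof of this proposition to the supplementary material, so I cannot compare line by line, but your route is exactly the kind of direct, explicit $n=2$ analysis the paper signals is necessary: the author notes that the coupling/majorization strategy behind Theorem~\ref{thm:main} breaks down for the penalized mechanism, and your proof circumvents it by collapsing the problem to a single scalar variable $d$ at fixed sum $s$. Two cosmetic points you may wish to tighten: (i) $U$ need not be differentiable, so phrase the monotonicity of $\phi$ via increments of $U$ (the standard fact that $U(x+t)-U(x)$ is nondecreasing in $x$ for $t>0$), which is what your derivative computation is really using; and (ii) state explicitly that $\phi'$ being even (by oddness of $\phi$) extends the monotonicity from $[0,\infty)$ to all of $\R$, which your final case analysis implicitly requires when $d_2<0$.
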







\section{Discussion}
\label{sec:discussion}

In this paper, I have introduced the Isotonic Mechanism to incentivize the author of a set of papers to report the true ranking, thereby obtaining more accurate scores for decision-making. This mechanism is conceptually simple and computationally efficient. Owing to the appealing optimality guarantees of this mechanism, it is likely that its use can, to some extent, alleviate the poor quality of review scores in many large machine learning conferences. To employ this mechanism, an intermediate step is to test the use of this mechanism. For example, conference organizers can mandate a randomly selected group of authors to rank their papers and use the outcome of the Isotonic Mechanism for this selected group as a test.

I propose several directions for tackling practical challenges in applying the Isotonic Mechanism. First, the assumption of convex utility, which is critical for the optimality of the mechanism, may not always hold for some authors. As argued in Section~\ref{sec:optimal-strategy}, the utility is convex or approximately convex if the author aims for higher scores to be considered for more visibility and paper awards, beyond the acceptance of her papers. By contrast, if nothing more than acceptance or rejection makes a difference to the author, one may turn to a thresholded utility function of the form
\[
U(r) = 
\begin{cases}
0, \quad \text{if } r < r_0,\\
u, \quad \text{if } r \ge r_0,
\end{cases}
\]
where $u$ and $r_0$ are constants. Therefore, an important question for future work is to extend the Isotonic Mechanism in the face of non-convex utility.

From a practical standpoint, another direction is to incorporate the strategic behaviors of both the author and reviewers. The author may deliberately submit additional low-quality papers and rank them as the worst. In this case, the Isotonic Mechanism tends to raise the scores for ``normal'' papers, thereby increasing the chance of acceptance for these papers that the author cares about. One possible approach to overcoming this issue is to set a screening step to filter very low-quality submissions. Moreover, the Isotonic Mechanism enables the reviewers to influence the author's other papers that are examined by a different set of reviewers. It would be interesting to investigate how the reviewers would act in self-interest and improve the mechanism to confront any violation of ethical codes.

Third, although Theorem~\ref{thm:vv} shows the great benefits of using the Isotonic Mechanism, it would be of interest to perform experiments to quantify the magnitude of the benefits. Upon the completion of this study, the NeurIPS 2021 Program Chairs asked each author to ``rank their papers in terms of their own perception of the papers' scientific contributions to the NeurIPS community,'' although the data were not used for making decisions. Nevertheless, an interesting empirical study would be to investigate the outcome of using the Isotonic Mechanism for NeurIPS.

Finally, a critical direction of future research is to extend the Isotonic Mechanism to the setting where each item/paper is owned/authored by multiple owners/authors. This is a necessary step toward applying this mechanism to large machine learning conferences, where most papers are written by multiple authors. Simply invoking the Isotonic Mechanism for each author does not seem to be a good solution. For example, an author can adversarially rank a good paper co-authored by, say, Bob, as the lowest, in order to place her papers that are not co-authored by Bob in an advantageous position. From a different perspective, would the use of the Isotonic Mechanism in an appropriate manner discourage guest authorship? This challenging extension is left for future work.



\section*{Acknowledgments}
I am grateful to Nihar Shah and Haifeng Xu for very helpful and enlightening discussions. I would also like to thank the anonymous reviewers for their constructive comments that helped improve the presentation of this work. This work was supported in part by NSF through
CAREER DMS-1847415 and CCF-1934876, an Alfred Sloan Research Fellowship, and the Wharton Dean's Research Fund.

{\small
\bibliographystyle{abbrv}
\bibliography{ref}
}








\end{document}